\newcommand{\D}{\mathcal{D}}
\newcommand{\B}{\mathcal{B}}
\newcommand{\X}{\mathcal{X}}
\newcommand{\Y}{\mathcal{Y}}
\renewcommand{\L}{\mathcal{L}}
\newcommand{\FRo}{F_{R_o}}
\newcommand{\fro}{f_{R_o}}
\newcommand{\frol}{f_{R_o + l}}
\newcommand{\froL}{f_{R_{o}+L}}
\newcommand{\FRoL}{F_{R_{o}+L}}
\newcommand{\s}{source}
\renewcommand{\r}{recipient}
\newcommand{\lk}{learned knowledge}
\newcommand{\lrs}{learning resources}
\newcommand{\dd}{decomposable}
\newcommand{\dm}{decomposable probability-of-success metric}
\newcommand{\dms}{decomposable probability-of-success metrics}
\renewcommand{\D}{\mathcal{D}}
\newcommand{\A}{\mathcal{A}}
\renewcommand{\L}{\mathcal{L}}
\newcommand{\M}{Monta{\~{n}}ez}
\newcommand{\g}{\phi}
\newcommand{\E}{\mathbb{E}}
\newcommand{\unif}{\mathcal{U}[\mathcal{P}]}
\newcommand{\dif}{\text{d}}
\newcommand{\af}{\mathrm{Affin}}
\newcolumntype{P}[1]{>{\centering\arraybackslash}p{#1}}
\begin{document}

\title{Limits of Transfer Learning}

 \author{Jake Williams\inst{1}
 \and
 Abel Tadesse\inst{2} 
 \and
 Tyler Sam\inst{1} 
 \and
 Huey Sun\inst{3}
 \and \\
George D.\ Monta{\~n}ez\inst{1}
}
\authorrunning{Accepted for Presentation at LOD 2020}
\institute{Harvey Mudd College, Claremont, CA 91711, USA \and
 Claremont McKenna College, Claremont, CA 91711, USA
 \and
 Pomona College, Claremont, CA 91711, USA\\
 \email{jwilliams@hmc.edu}}

\maketitle              
\begin{abstract}
Transfer learning involves taking information and insight from one problem domain and applying it to a new problem  domain. Although widely used in practice, theory for transfer learning remains less well-developed. To address this, we prove several novel results related to transfer learning, showing the need to carefully select which sets of information to transfer and the need for dependence between transferred information and target problems. Furthermore, we prove how the degree of probabilistic change in an algorithm using transfer learning places an upper bound on the amount of improvement possible. These results build on the algorithmic search framework for machine learning, allowing the results to apply to a wide range of learning problems using transfer.

\keywords{Transfer Learning \and Algorithmic Search Framework \and Affinity.}
\end{abstract}

\section{Introduction}
Transfer learning is a type of machine learning where insight gained from solving one problem is applied to solve a separate, but related problem~\cite{pan2009survey}. Currently an exciting new frontier in machine learning, transfer learning has diverse practical application in a number of fields, from training self-driving cars~\cite{choi2018driving}, where model parameters are learned in simulated environments and transferred to real-life contexts, to audio transcription~\cite{wang2015transfer}, where patterns learned from common accents are applied to learn less common accents. Despite its potential for use in industry, little is known about the theoretical guarantees and limitations of transfer learning.

To analyze transfer learning, we need a way to talk about the breadth of possible problems we can transfer from and to under a unified formalism. One such approach is the reduction of various machine learning problems (such as regression and classification) to a type of search, using the method of the algorithmic search framework \cite{Montanez2016TheFO,montanez2017machine}. This reduction allows for the simultaneous analysis of a host of different problems, as results proven within the framework can be applied to any of the problems cast into it. In this work, we show how transfer learning can fit within the framework, and define affinity as a measure of the extent to which information learned from solving one problem is applicable to another. Under this definition, we prove a number of useful theorems that connect affinity with the probability of success of transfer learning. We conclude our work with applied examples and suggest an experimental heuristic to determine conditions under which transfer learning is likely to succeed.

\section{Distinctions from Prior Work}
Previous work within the algorithmic search framework has focused on bias~\cite{montanez2019fobfl,lauw2020bias}, a measure of the extent to which a distribution of information resources is predisposed towards a fixed target. The case of transfer learning carries additional complexity as the recipient problem can use not only its native information resource, but the learned information passed from the source as well. Thus, affinity serves as an analogue to bias which expresses this nuance, and enables us to prove a variety of interesting bounds for transfer learning.

\section{Background}

\subsection{Transfer Learning}
\subsubsection{Definition of Transfer Learning}

Transfer learning can be defined by two machine learning problems~\cite{pan2009survey}, a source problem and a recipient problem. Each of these is defined by two parts, a domain and a task. The domain is defined by the feature space, $\X$, the label space, $\Y$, and the data, $D = \{(x_i, y_i), \dots, (x_n, y)\}$, where $x_i \in \X$ and $y_i \in \Y$. The task is defined by an objective function $P_f(Y|X)$, which is a conditional distribution over the label space, conditioned on an element of the feature space. In other words, it tells us the probability that a given label is correct for a particular input. A machine learning problem is ``solved'' by an algorithm $\mathcal{A}$, which takes in the domain and outputs a function $P_{\mathcal{A}}(Y|X)$. The success of an algorithm is its ability to learn the objective function as its output. Learning and optimization algorithms use a loss function $\L(p)$ to evaluate an output function to decide if it is worthy of outputting. Such algorithms can be viewed as black-box search algorithms~\cite{Montanez2016TheFO}, where the particular algorithm determines the behavior of the black box. For transfer learning under this view, the output is defined as the final element in the search history.

\subsubsection{Types of Transfer Learning}

Pan and Yang separated transfer learning into four categories based on the type of information passed between domains~\cite{pan2009survey}:
\begin{itemize}
    \item {\it Instance transfer}: Supplementing the target domain data with a subset of data from the source domain.
    \item {\it Feature-representation transfer}: Using a feature-representation of inputs that is learned in the source domain to minimize differences between the source and target domains and reduce generalization error in the target task.
    \item {\it Parameter transfer}: Passing a subset of the parameters of a model from the source domain to the target domain to improve the starting point in the target domain.
    \item {\it Relational-knowledge transfer}: Learning a relation between knowledge in the source domain to pass to the target domain, especially when either or both do not follow i.i.d.\ assumptions. 
\end{itemize}

\subsection{The Search Framework}
\begin{figure}[htbp!]
    \centering
    \includegraphics[scale=0.4]{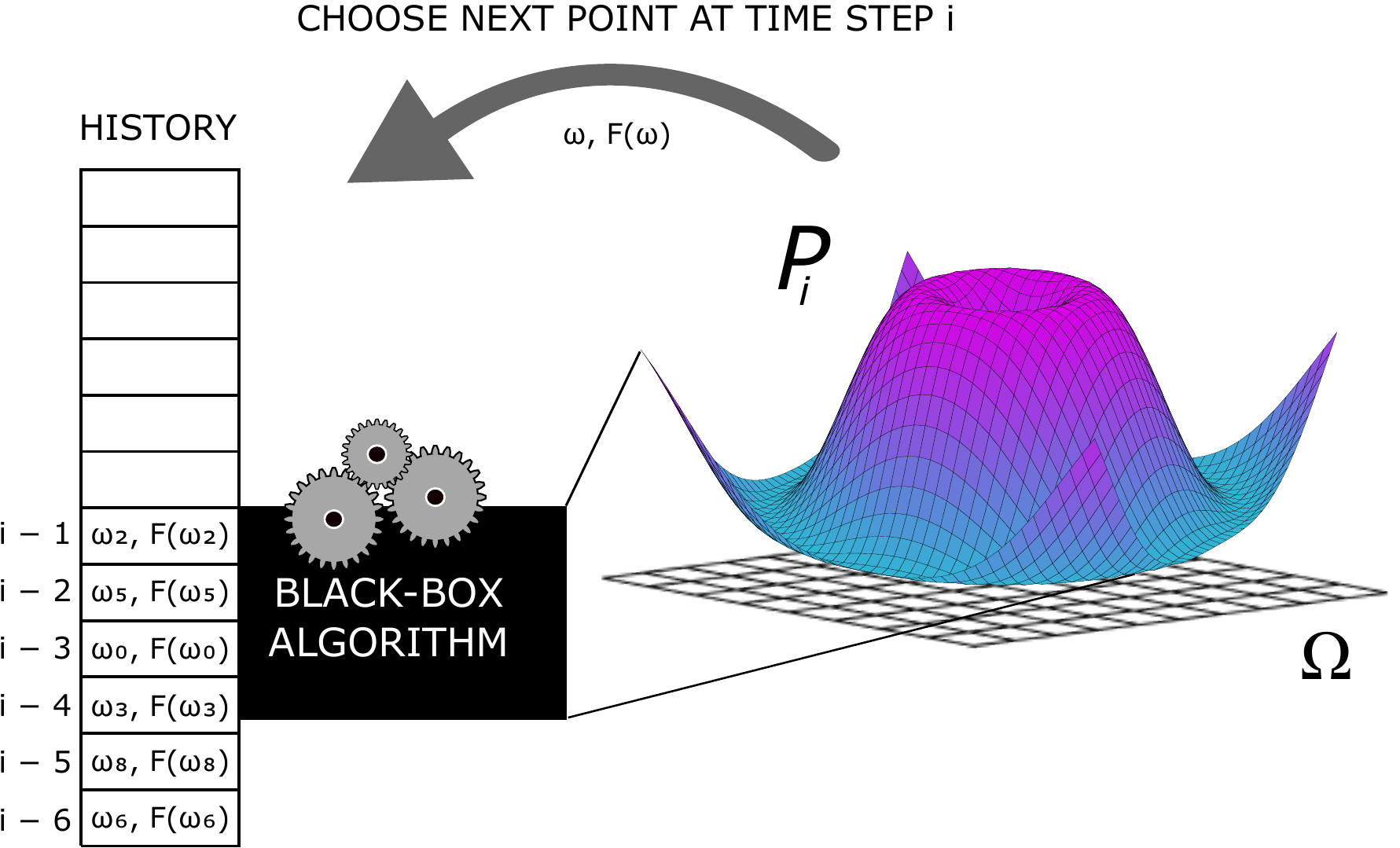}
    \caption{Black-box search algorithm. We add evaluated queries to the history according to the distribution iteratively. Reproduced from~\cite{Montanez2016TheFO}.}
    \label{fig:searchFramework}
\end{figure}
To analyze transfer learning from a theoretical perspective, we take inspiration from previous work that views machine learning as a type of search. \M{} casts machine learning problems, including Vapnik’s general learning problem (covering regression, classification, and density estimation) into an algorithmic search framework  \cite{montanez2017machine}. For example, classification is seen as a search through all possible labelings of the data, and clustering as a search through all possible ways to cluster the data \cite{montanez2017machine}. This framework provides a common structure which we can use to analyze different machine learning problems, as each of them can be seen as a search problem with a defined set of components. Furthermore, any result we prove about search problems applies to all machine learning problems we can represent within the framework. 

Within the algorithmic search framework, the three components of a search problem are the search space $\mathrm{\Omega}$, target set $T$, and external information resource $F$. The search space, which is finite and discrete due to the finite precision representation of numbers on computers, is the set of elements to be examined. The target set is a nonempty subset of $\mathrm{\Omega}$ that contains the elements we wish to find. Finally, the external information resource is used to evaluate the elements of the search space. Usually, the target set and external information resource are related, as the external information resource guides the search to the target~\cite{Montanez2016TheFO}.    

In this framework, an iterative algorithm searches for an element in the target set, depicted in Figure~\ref{fig:searchFramework}. The algorithm is viewed as a black box that produces a probability distribution over the search space from the search history. At each step, an element is sampled from $\mathrm{\Omega}$ according to the most recent probability distribution. The external information resource is then used to evaluate the queried element, and the element and its evaluation are added to the search history. Thus, the search history is the collection of all points sampled and all information gained from the information resource during the course of the search. Finally, the algorithm creates a new probability distribution according to its rules. Abstracting the creation of the probability distribution allows the search framework to work with many different search algorithms~\cite{montanez2017machine}.  

\subsection{Decomposable Probability-of-Success Metrics}
Working within the same algorithmic search framework~\cite{Montanez2016TheFO,montanez2017machine, montanez2019fobfl},  to measure the performance of search and learning algorithms, Sam et al.~\cite{sam2020decomposable} defined \dms{} as
\[
    \g(t,f) = \mathbf{t}^{\top}\mathbf{P}_{\g,f}=P_{\g}(X \in t|f)
    \]
where $\mathbf{P}_{\g,f}$ is not a function of target set $t$ (with corresponding target function $\mathbf{t}$), being conditionally independent of it given information resource $f$. They note that one can view $\mathbf{t}^{\top}\mathbf{P}_{\g,f}$ as an expectation over the probability of successfully querying an element from the target set at each step according to an arbitrary distribution. In the case of transfer learning, the distribution we choose should place most or all of its weight on the last or last couple of steps -- since we transfer knowledge from the source problem's model after training, we care about our success at the last few steps when we're done training, rather than the first few.

\subsection{Casting Transfer Learning into the Search Framework}
 Let $\mathcal{A}$ denote a fixed learning algorithm.
 We cast the \s{}, which consists of $\mathcal{X}_s, \mathcal{Y}_s, D_s,$ and $P_{f, s}(Y|X)$, into the algorithmic search framework as 
\begin{enumerate}
     \item $\mathrm{\Omega} = $ range$(\A)$;
    \item $T = \{P(Y|X) \in \textrm{range($\A$)} \mid \Xi(P, P_{f, s}) < \epsilon \}$;
    \item $F = \{D_s, \mathcal{L}_s \}$;
    \item $F(\emptyset) = \emptyset; \textrm{ and }$
    \item $F(\omega_i) = \mathcal{L}_s(\omega_i)$.
\end{enumerate}
where $w_i$ is the $i$th query in the search process, $\mathcal{L}_s$ the loss function for the \s, and $\Xi_s$ is an error functional on learned conditional distribution $P$ and the optimal conditional distribution $P_{f, s}$. 

Generally, any information from the \s{} can be encoded in a binary string, so we represent the knowledge transferred as a finite length binary string. Let this string be $L = \{0, 1\}^n$. Thus, we cast the \r{}, which consists of $\mathcal{X}_r, \mathcal{Y}_r, D_r,$ and $P_{f, r}(Y|X)$, into the search framework as 
\begin{enumerate}
     \item $\mathrm{\Omega} = $ range$(\A)$;
    \item $T = \{P(Y|X) \in \textrm{range($\A$)} \mid \Xi(P, P_{f, r}) < \epsilon \}$;
    \item $F = \{D_r, \mathcal{L}_r \}$;
    \item $F(\emptyset) = L; \textrm{ and }$
    \item $F(\omega_i) = \mathcal{L}_r(\omega_i)$.
\end{enumerate}
where $\mathcal{L}_r$ is a loss function, and $\Xi_t$ is an error functional on $P$ and the optimal conditional distribution $P_{f, r}$.

\begin{figure*}[ht]
\centering
\begin{subfigure}[b]{0.3\textwidth}
\begin{tikzpicture}[>=stealth',semithick,auto]
    \tikzstyle{subj} = [circle, minimum width=8pt, fill, inner sep=0pt]
    \tikzstyle{obj}  = [circle,minimum width=8pt, fill, inner sep=0pt]
    \tikzstyle{dc}   = [circle, minimum width=8pt, draw, inner sep=0pt, path picture={\draw (path picture bounding box.south east) -- (path picture bounding box.north west) (path picture bounding box.south west) -- (path picture bounding box.north east);}]

    \tikzstyle{every label}=[font=\bfseries]

    \node[obj,  label=right:$T_R$] (tr) at (0,2) {};
    \node[obj, label=right:$F_{R_o}$] (fro) at (0,1) {};
    \node[obj,   label=right:$F_{R_o+L}$] (frok) at (0,0) {};
    \node[obj,  label=below:$L$] (k) at (-1,0) {};
    \node[obj,  label=below:$F_S$] (fs) at (-2,0) {};

    \path[->]   (tr)    edge               (fro);
    \draw[dashed, ->, inner sep=0pt]   (tr)    edge  node {? }           (fs);
    \path[->]   (fro)   edge                (frok);
    \path[->]   (fs)   edge                (k);
    \path[->]   (k)   edge                (frok);

\end{tikzpicture}
    \caption{General Case}
    \label{fig:gc}
    \end{subfigure}%
    ~
    \begin{subfigure}[b]{0.3\textwidth}
        \begin{tikzpicture}[>=stealth',semithick,auto]
    \tikzstyle{subj} = [circle, minimum width=8pt, fill, inner sep=0pt]
    \tikzstyle{obj}  = [circle,minimum width=8pt, fill, inner sep=0pt]
    \tikzstyle{dc}   = [circle, minimum width=8pt, draw, inner sep=0pt, path picture={\draw (path picture bounding box.south east) -- (path picture bounding box.north west) (path picture bounding box.south west) -- (path picture bounding box.north east);}]

    \tikzstyle{every label}=[font=\bfseries]

    \node[obj,  label=right:$T_R$] (tr) at (0,2) {};
    \node[obj, label=right:$F_{R_o}$] (fro) at (0,1) {};
    \node[obj,   label=right:$F_{R_o+L}$] (frok) at (0,0) {};
    \node[obj,  label=below:$L$] (k) at (-1,0) {};
    \node[obj,  label=below:$F_S$] (fs) at (-2,0) {};

    \path[->]   (tr)    edge               (fro);
    \path[->]   (tr)    edge               (fs);
    \path[->]   (fro)   edge                (frok);
    \path[->]   (fs)   edge                (k);
    \path[->]   (k)   edge                (frok);

\end{tikzpicture}
    \caption{Case 1}
    \end{subfigure}%
    ~
    \begin{subfigure}[b]{0.3\textwidth}
        \centering
        \begin{tikzpicture}[>=stealth',semithick,auto]
    \tikzstyle{subj} = [circle, minimum width=8pt, fill, inner sep=0pt]
    \tikzstyle{obj}  = [circle,minimum width=8pt, fill, inner sep=0pt]
    \tikzstyle{dc}   = [circle, minimum width=8pt, draw, inner sep=0pt, path picture={\draw (path picture bounding box.south east) -- (path picture bounding box.north west) (path picture bounding box.south west) -- (path picture bounding box.north east);}]

    \tikzstyle{every label}=[font=\bfseries]

    \node[obj,  label=right:$T_R$] (tr) at (0,2) {};
    \node[obj, label=right:$F_{R_o}$] (fro) at (0,1) {};
    \node[obj,   label=right:$F_{R_o+L}$] (frok) at (0,0) {};
    \node[obj,  label=below:$L$] (k) at (-1,0) {};
    \node[obj,  label=below:$F_S$] (fs) at (-2,0) {};

    \path[->]   (tr)    edge               (fro);
    \path[->]   (fro)   edge                (frok);
    \path[->]   (fs)   edge                (k);
    \path[->]   (k)   edge                (frok);

\end{tikzpicture}
    \caption{Case 2}
    \end{subfigure}
    \caption{Dependence Structure for Transfer Learning}
    \label{DSTL}
\end{figure*}
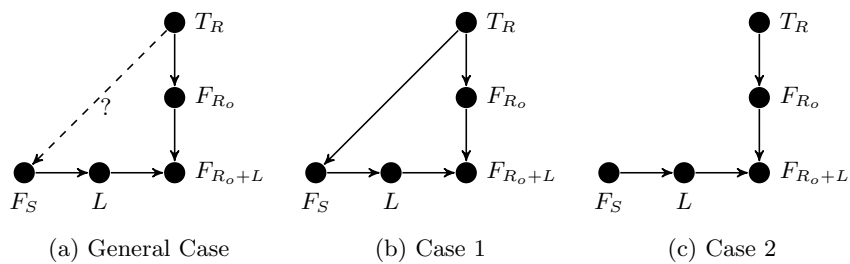

\section{Preliminaries}

\subsection{Affinity}

In a transfer learning problem, we want to know how the \s{} problem can improve the \r{} problem, which it does through the information resource. So, we can think about how the bias of the \r{} is changed by the \lk{} that the \s{} passes over. Recall that the bias is defined by a distribution over possible information resources. However, we know that the information resource will contain $\fro$, the original information resource from the recipient problem. Our distribution over information resources will therefore take that into account, and only care about the \lk{} being passed over from the \s. 

To quantify this, we let $\D_L$ be the distribution placed over $L$, the possible \lrs, by the \s. We can use it to make statements similar to bias in traditional machine learning by defining a property called \textit{affinity}.

Consider a transfer learning problem with a fixed $k$-hot target vector $\mathbf{t}$, fixed \r{} information resource $\fro$, and a distribution $\D_L$ over a collection of possible \lrs{}, with $L \sim \D_L$. The \textbf{affinity} between the distribution and the \r{} problem is defined as
\begin{align*}
    \text{Affin}(\D_L, \textbf{t}, \fro) &= \E_{\D_L}[\textbf{t}^{\top}{\bf P}_{\g,\froL}] - \textbf{t}^{\top}{\bf P}_{\g,\fro} \\
&= \textbf{t}^{\top}\E_{\D_L}[{\bf P}_{\g,\froL}] - \g(t , \fro) \\
&= \textbf{t}^{\top}\int_{\mathcal{B}} {\bf P}_{\g,\froL} \D_L(l) \text{d}l - \g(t,\fro).
\end{align*}
Affinity can be interpreted as the expected increase or decrease in performance on the recipient problem when using a learning resource sampled from a set according to a given distribution.

Using affinity, we seek to prove bounds similar to existing bounds about bias, such as the Famine of Favorable Targets and Famine of Favorable Information Resources~\cite{montanez2019fobfl}.

\section{Theoretical Results}
We begin by showing that affinity is a conserved quantity, implying that positive affinity towards one target is offset by negative affinity towards other targets.
\begin{restatable}[Conservation of Affinity]{theorem}{consofaffin}\label{thm:CONS-OF-AFFIN}
For any arbitrary distribution $\D$ and any $\fro$,
\[
    \sum_{\mathbf{t}} \af(\D, \mathbf{t}, \fro) = 0.
\]
\end{restatable}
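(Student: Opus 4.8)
The plan is to exploit the fact that affinity is linear in the target vector $\mathbf{t}$ and that the underlying success vectors $\mathbf{P}_{\g,\fro}$ and $\mathbf{P}_{\g,\froL}$ are genuine probability distributions over $\mathrm{\Omega}$. Writing out the definition, $\af(\D,\mathbf{t},\fro) = \mathbf{t}^{\top}\E_{\D}[\mathbf{P}_{\g,\froL}] - \mathbf{t}^{\top}\mathbf{P}_{\g,\fro}$, I would first pull the finite sum over all targets inside and factor out the common $\mathbf{t}^{\top}$, obtaining
\[
    \sum_{\mathbf{t}} \af(\D, \mathbf{t}, \fro) = \left(\sum_{\mathbf{t}} \mathbf{t}^{\top}\right)\left(\E_{\D}[\mathbf{P}_{\g,\froL}] - \mathbf{P}_{\g,\fro}\right).
\]
This reduces the whole problem to understanding the vector $\sum_{\mathbf{t}}\mathbf{t}$.

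The next step is a straightforward counting argument. Summing over all $k$-hot target vectors $\mathbf{t}$ on a search space of size $|\mathrm{\Omega}|$, each coordinate is set to $1$ in exactly those target sets that contain the corresponding element, so by symmetry $\sum_{\mathbf{t}}\mathbf{t} = \binom{|\mathrm{\Omega}|-1}{k-1}\,\mathbf{1}$, a fixed multiple of the all-ones vector $\mathbf{1}$. I would then substitute this in, so that the sum collapses to $\binom{|\mathrm{\Omega}|-1}{k-1}\bigl(\mathbf{1}^{\top}\E_{\D}[\mathbf{P}_{\g,\froL}] - \mathbf{1}^{\top}\mathbf{P}_{\g,\fro}\bigr)$.

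Finally, I would invoke the property that $\mathbf{P}_{\g,f}$ is a probability vector for any $f$, so its entries sum to one, i.e.\ $\mathbf{1}^{\top}\mathbf{P}_{\g,\fro} = 1$. For the expectation term I would interchange the linear functional $\mathbf{1}^{\top}(\cdot)$ with the expectation over $\D$, giving $\mathbf{1}^{\top}\E_{\D}[\mathbf{P}_{\g,\froL}] = \E_{\D}[\mathbf{1}^{\top}\mathbf{P}_{\g,\froL}] = \E_{\D}[1] = 1$, since each realized $\mathbf{P}_{\g,\froL}$ also sums to one. Both bracketed terms are therefore equal to $1$, their difference vanishes, and the entire sum is $0$.

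The steps here are individually routine, so the only real point needing care — and the thing I would flag as the main obstacle — is justifying the two collapses $\mathbf{1}^{\top}\mathbf{P}_{\g,f}=1$, namely that the decomposable success metric really does arise from a normalized probability distribution over the search space (this follows from the interpretation of $\mathbf{t}^{\top}\mathbf{P}_{\g,f}$ as an expected per-query success probability), together with the legitimacy of swapping the finite sum over targets and the expectation/integral over $\D_L$. Once these normalization facts are in hand, the conservation identity is immediate, and nothing about $\D$ or $\fro$ beyond these structural properties is used, which is why the result holds for \emph{any} distribution $\D$ and \emph{any} $\fro$.
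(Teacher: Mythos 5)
Your proposal is correct and follows essentially the same route as the paper's proof: factor $\sum_{\mathbf{t}}\mathbf{t}^{\top}$ out by linearity, observe by symmetry that it is a constant multiple of $\bm{1}^{\top}$, and conclude via $\bm{1}^{\top}\mathbf{P}_{\g,f}=1$. The only (immaterial) difference is the index set of the sum --- you sum over $k$-hot vectors for fixed $k$, giving the constant $\binom{|\mathrm{\Omega}|-1}{k-1}$, whereas the paper sums over all nonempty subsets of $\mathrm{\Omega}$, giving $2^{|\mathrm{\Omega}|-1}$; since the constant cancels, both versions of the identity hold by the same argument.
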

This result agrees with other no free lunch~\cite{Wolpert1997NoFL} and conservation of information results~\cite{SchafferConservation,dembski2009conservation,lauw2020bias}, showing that trade-offs must always be made in learning.

Assuming the dependence structure of Figure~\ref{DSTL}, we next bound the mutual information between our updated information resource and the recipient target in terms of the source and recipient information resources.

\begin{restatable}[Transfer Learning under Dependence]{theorem}{tld}\label{TLuD}
Define 
\[
\phi_{TL} := \E_{T_R,F_{R+L}}[\phi(T_R,F_{R+L})] = \Pr(\omega \in T_R; \mathcal{A}) 
\] as the probability of success for transfer learning. Then, 
    \begin{align*}
    \phi_{TL} \leq \frac{I(F_S; T_R) + I(F_R; T_R) + D(P_{T_R} \| \mathcal{U}_{T_R}) + 1}{I_{\mathrm{\Omega}}}
\end{align*}
 where $I_{\mathrm{\Omega}} = -\log |T_R|/|\mathrm{\Omega}|$ ($T_R$ being of fixed size), $D(P_{T_R} \| \mathcal{U}_{T_R})$ is the Kullback-Leibler divergence between the marginal distribution on $T_R$ and the uniform distribution on $T_R$, and $I(F; T)$ is the mutual information. 
\end{restatable}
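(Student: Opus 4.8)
The plan is to reduce the claim to a single-resource information bound applied to the combined resource $F_{R+L}$, and then to decompose the resulting mutual-information term into the source and recipient pieces using the dependence structure of Figure~\ref{DSTL}.

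First I would invoke (or, if needed, re-derive) the base bound relating a decomposable probability-of-success metric to the mutual information between a single information resource and the target, namely
\[
\g_{TL} = \E_{T_R, F_{R+L}}\!\left[\g(T_R, F_{R+L})\right] \leq \frac{I(F_{R+L}; T_R) + D(P_{T_R}\|\mathcal{U}_{T_R}) + 1}{I_\Omega}.
\]
This follows the template of the Famine-of-Favorable-Information-Resources results: decomposability gives that $\mathbf{P}_{\g, F_{R+L}}$ is conditionally independent of $T_R$ given $F_{R+L}$, so the algorithm's output forms a Markov chain with $T_R$ through $F_{R+L}$, and the per-query success $\g(T_R,F_{R+L}) = \mathbf{t}^{\top}\mathbf{P}_{\g, F_{R+L}}$ can be averaged and bounded by a KL-divergence argument, with $\log(|\mathrm{\Omega}|/|T_R|) = I_\Omega$ playing the role of the information cost of specifying a target of fixed size $|T_R|$.

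Next I would bound $I(F_{R+L}; T_R)$. The combined resource $F_{R+L}$ is a deterministic function of the recipient's native resource $F_R$ (written $F_{R_o}$ in Figure~\ref{DSTL}) and the learned string $L$, and $L$ is itself a function of $F_S$; hence $F_{R+L}$ is a function of the pair $(F_R, F_S)$. The data-processing inequality then yields $I(F_{R+L}; T_R) \leq I\bigl((F_R, F_S); T_R\bigr)$. To split this joint term I would use the chain rule together with the conditional independence $F_R \perp F_S \mid T_R$ supplied by the dependence structure (both $F_R$ and $F_S$ are children of $T_R$, and the only other path between them passes through the collider $F_{R+L}$, on which we do not condition). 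Writing the interaction-information identity and using $I(F_R; F_S \mid T_R) = 0$ gives $I(F_R; T_R \mid F_S) = I(F_R; T_R) - I(F_R; F_S) \leq I(F_R; T_R)$, so
\[
I\bigl((F_R, F_S); T_R\bigr) = I(F_S; T_R) + I(F_R; T_R \mid F_S) \leq I(F_S; T_R) + I(F_R; T_R).
\]
Substituting this chain of inequalities into the base bound produces the stated result.

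I expect the main obstacle to be establishing the base bound rather than the decomposition: pinning down the exact constant and the additive $D(P_{T_R}\|\mathcal{U}_{T_R}) + 1$ term requires care in the KL-divergence/Jensen step and in tracking how the fixed target size enters through $I_\Omega$. The decomposition step is comparatively routine once the conditional independence $F_R \perp F_S \mid T_R$ is read off the graph; the one place to be cautious there is the sign in the interaction-information identity, so that the nonnegativity of $I(F_R; F_S)$ is used in the direction that makes the joint mutual information subadditive.
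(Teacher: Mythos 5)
Your proposal is correct and follows essentially the same route as the paper: invoke the single-resource Learning Under Dependence bound for $\phi_{TL}$ in terms of $I(F_{R+L};T_R)$, then use the data processing inequality and the d-separation structure of Figure~\ref{DSTL} to establish $I(F_{R+L};T_R)\leq I(F_S;T_R)+I(F_R;T_R)$ (the paper's Lemma~\ref{lem:MI}). The only difference is cosmetic: the paper's lemma applies the chain rule conditioning on $F_{R_o}$ and uses $L\perp F_{R_o}\mid T_R$ with a conditioning-reduces-entropy step before applying the DPI to $T_R\to F_S\to L$, whereas you apply the DPI first and then condition on $F_S$, using $F_R\perp F_S\mid T_R$ through the interaction-information identity — both are valid readings of the same graph.
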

This theorem upper bounds the probability of successful transfer ($\phi_{TL}$) to show that transfer learning can't help us more than our information resources allow. This point is determined by $I(F_S; T_R)$, the amount of mutual information between the source's information resource and the recipient's target, by $I(F_R; T_R)$ the amount of mutual information between the \r's information resource and the recipient's target, and how much $P_{T_R}$ (the distribution over the \r's target) `diverges' from the uniform distribution over the \r's target, $\mathcal{U}_{T_R}$. This makes sense in that
 \begin{itemize}
     \item the more dependent $F_S$ and $T_R$, the more useful we expect the source's information resource to be in searching for $T_R$, in which case $q_{TL}$ can take on larger values.
     \item the more $P_{T_R}$ diverges from $\mathcal{U}_{T_R}$, the less helpless we are against the randomness (since the uniform distribution maximizes entropic uncertainty).
 \end{itemize}

\begin{restatable}[Famine of Favorable Learned Information Resources]{theorem}{fofli}
        Let $\mathcal{B}$ be a finite set of learning resources and let $t \subseteq \Omega$ be an arbitrary fixed $k$-size target set. Given a \r{} problem $(\Omega, t, \fro)$, define 
        \begin{align*}
          \mathcal{B}_{\phi_{\mathrm{min}}} &= \{l \in \mathcal{B} \mid \g(t,\frol) \geq \phi_{\mathrm{min}} \},
        \end{align*}
        where $\g(t, \frol)$ is the \dm{} for algorithm $\A$ on search problem $(\Omega, t,\frol)$ and $\phi_{\mathrm{min}} \in (0,1]$ represents the minimally acceptable probability of success under $\phi$. Then,
        \begin{align*}
            \frac{|\B_{\phi_{\mathrm{min}}}|}{|\B|} &\leq \frac{\g(t, \fro) +  \af(\mathcal{U}[\B], \mathbf{t}, \fro)}{\phi_{\mathrm{min}}}
        \end{align*}
        where $\phi(t, \fro)$ is the \dm{} with the \r{}'s original information resource.
        \label{thm:fofli}
    \end{restatable}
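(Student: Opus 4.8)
The plan is to recognize the numerator on the right-hand side as the uniform average of $\g(t,\frol)$ over $\B$, and then apply a Markov-type counting argument. First I would unpack the affinity term under the uniform distribution $\mathcal{U}[\B]$. Since $\mathbf{t}^{\top}\mathbf{P}_{\g,\frol} = \g(t,\frol)$ by the definition of a decomposable probability-of-success metric, the expectation over $L \sim \mathcal{U}[\B]$ collapses into a finite average:
\[
\af(\mathcal{U}[\B], \mathbf{t}, \fro) = \frac{1}{|\B|}\sum_{l \in \B}\g(t,\frol) - \g(t,\fro).
\]
Adding $\g(t,\fro)$ to both sides gives the identity $\g(t,\fro) + \af(\mathcal{U}[\B], \mathbf{t}, \fro) = \frac{1}{|\B|}\sum_{l \in \B}\g(t,\frol)$, so the quantity we must bound from above is exactly the uniform average of the per-resource success probabilities.

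Second, I would lower-bound this average by restricting the summation to the favorable set $\B_{\phi_{\mathrm{min}}}$. Because each success probability $\g(t,\frol) \in [0,1]$ is nonnegative, discarding the resources outside $\B_{\phi_{\mathrm{min}}}$ can only decrease the sum, and every retained term is at least $\phi_{\mathrm{min}}$ by the definition of $\B_{\phi_{\mathrm{min}}}$. Hence
\[
\sum_{l \in \B}\g(t,\frol) \;\geq\; \sum_{l \in \B_{\phi_{\mathrm{min}}}}\g(t,\frol) \;\geq\; |\B_{\phi_{\mathrm{min}}}|\,\phi_{\mathrm{min}}.
\]
Dividing through by $|\B|\,\phi_{\mathrm{min}}$, which is valid since $\phi_{\mathrm{min}} > 0$, and substituting the identity from the first step then yields the claimed bound.

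This argument is essentially Markov's inequality in disguise, so I do not anticipate a deep obstacle. The only points that require care are (i) confirming that decomposability lets $\mathbf{t}^{\top}\mathbf{P}_{\g,\frol}$ be read off as $\g(t,\frol)$, so the expectation over $\mathcal{U}[\B]$ becomes a plain average rather than something entangled with the target, and (ii) verifying the nonnegativity of each summand to justify discarding the complementary terms. Both follow immediately from the definitions, so the whole proof reduces to these two short inequalities and one algebraic rearrangement.
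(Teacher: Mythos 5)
Your proof is correct and is essentially the paper's argument: the paper rewrites $|\B_{\phi_{\mathrm{min}}}|/|\B|$ as $\Pr(\mathbf{t}^{\top}\mathbf{P}_{\g,\froL} \geq \phi_{\mathrm{min}})$ under $L \sim \mathcal{U}[\B]$, applies Markov's inequality, and identifies the resulting expectation with $\g(t,\fro) + \af(\mathcal{U}[\B],\mathbf{t},\fro)$, which is exactly the Markov step you carry out by hand. The only difference is presentational — you unroll Markov's inequality via the truncated-sum argument rather than citing it — so no substantive gap remains.
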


Theorem~\ref{thm:fofli} demonstrates the proportion of $\phi_{\mathrm{min}}$-favorable information resources for transfer learning is bounded by the degree of success without transfer, along with the affinity (average performance improvement) of the set of resources as a whole. Highly favorable transferable resources are rare for difficult tasks, within any neutral set of resources lacking high affinity. Unless a set of information resources is curated towards a specific transfer task by having high affinity towards it, the set will not and cannot contain a large proportion of highly favorable elements.

\begin{restatable}[Futility of Affinity-Free Search]{theorem}{futility}\label{thm:futilityafs}
    For any fixed algorithm $\mathcal{A}$, fixed \r{} problem $(\Omega, t, \fro)$, where $t \subseteq \Omega$ with a corresponding target function $\mathbf{t}$, and distribution over information resources $\D_L$, if $\af(\D_L, \mathbf{t}, \fro) = 0$, then
    \begin{align*}
        \Pr(\omega \in t; \A_L)
        &= \phi(t, \fro)
    \end{align*}
    where $\Pr(\omega \in t; \A_L)$ represents the expected \dd{} probability of successfully sampling an element of $t$ using $\A$ with transfer, marginalized over learning resources $L \sim \D_L$, and $\phi(t, \fro)$ is the probability of success without $L$ under the given \dd{} metric.
    \end{restatable}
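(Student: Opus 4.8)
The plan is to derive this identity directly from the definition of affinity together with the definition of the decomposable probability-of-success metric. The core observation is that the left-hand side $\Pr(\omega \in t; \A_L)$—the expected decomposable probability of success with transfer, marginalized over $L \sim \D_L$—is precisely the first term appearing in the definition of $\af(\D_L, \mathbf{t}, \fro)$, so the conclusion is essentially a rearrangement of what the zero-affinity hypothesis asserts.

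First I would rewrite the left-hand side explicitly. Since $\Pr(\omega \in t; \A_L)$ is the decomposable metric $\g(t, \froL) = \mathbf{t}^{\top}\mathbf{P}_{\g, \froL}$ averaged over draws of the learned resource, we have
\[
\Pr(\omega \in t; \A_L) = \E_{\D_L}[\g(t, \froL)] = \E_{\D_L}[\mathbf{t}^{\top}\mathbf{P}_{\g, \froL}] = \mathbf{t}^{\top}\E_{\D_L}[\mathbf{P}_{\g, \froL}],
\]
where the final equality uses linearity of expectation, with the target vector $\mathbf{t}$ held fixed and pulled outside the integral over $L$.

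Next I would invoke the definition of affinity,
\[
\af(\D_L, \mathbf{t}, \fro) = \E_{\D_L}[\mathbf{t}^{\top}\mathbf{P}_{\g, \froL}] - \mathbf{t}^{\top}\mathbf{P}_{\g, \fro},
\]
and rearrange it to isolate the expected-success term, giving $\E_{\D_L}[\mathbf{t}^{\top}\mathbf{P}_{\g, \froL}] = \af(\D_L, \mathbf{t}, \fro) + \mathbf{t}^{\top}\mathbf{P}_{\g, \fro}$. Substituting the hypothesis $\af(\D_L, \mathbf{t}, \fro) = 0$ collapses the right-hand side to $\mathbf{t}^{\top}\mathbf{P}_{\g, \fro} = \g(t, \fro) = \phi(t, \fro)$, which combined with the first step yields exactly the claimed identity.

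I do not expect a genuine obstacle here, since the result is little more than a restatement of what zero affinity means. The only point requiring care is the first step—justifying that the marginalized success probability $\Pr(\omega \in t; \A_L)$ coincides with $\E_{\D_L}[\mathbf{t}^{\top}\mathbf{P}_{\g, \froL}]$. This relies on the decomposability of $\phi$, so that the per-step success probabilities are captured by $\mathbf{P}_{\g, \froL}$ independently of $t$ given the information resource, and on the fact that conditioning on the learned resource $L$ fixes the combined recipient information resource $\froL$. Once that identification is made, the conclusion follows by pure algebra.
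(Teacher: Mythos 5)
Your proposal is correct and follows essentially the same route as the paper's proof: both identify $\Pr(\omega \in t; \A_L)$ with $\E_{\D_L}[\mathbf{t}^{\top}\mathbf{P}_{\g,\froL}]$ by marginalizing over $L$ and invoking decomposability, then rearrange the definition of affinity and substitute the zero-affinity hypothesis. The paper merely writes the marginalization step more explicitly as an integral $\int_{\mathcal{L}} \Pr(\omega \in t \mid l; \A)\Pr(l)\,\dif l$ before making the same identification you describe.
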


Theorem~\ref{thm:futilityafs} tells us that transfer learning only helps in the case that we have a favorable distribution on learning resources, tuned to the specific problem at hand. Given a distribution \textit{not} tuned in favor of our specific problem, we can perform no better than if we had not used transfer learning. \textbf{This proves that transfer learning is not inherently beneficial in and of itself}, unless it is accompanied by a favorably tuned distribution over resources to be transferred. A natural question is how rare such favorably tuned distributions are, which we next consider in Theorem~\ref{thm:fofad}.
    
\begin{restatable}[Famine of Favorable Affinity Distributions]{theorem}{fofad}\label{thm:fofad}
Given a fixed target function $\mathbf{t}$ and a finite set of learned information resources $\mathcal{B}$, let $$\mathcal{P} = \{ \mathcal{D} \mid \mathcal{D} \in \mathbb{R}^{|\mathcal{B}|}, \sum_{l \in \B} \mathcal{D}(l) = 1 \}$$ be the set of all discrete $|\mathcal{B}|$-dimensional simplex vectors. Then,
\begin{align*}
    \frac{\mu (\mathcal{G}_{\mathbf{t}, \phi_{\mathrm{min}}})}{\mu (\mathcal{P})} &\leq \frac{\phi(t,\fro) + \af(\mathcal{U}[\B], \mathbf{t}, \fro)}{\phi_{\mathrm{min}}}
\end{align*}where $\mathcal{G}_{\mathbf{t}, \phi_{\mathrm{min}}} = \{ \mathcal{D} \mid \mathcal{D} \in \mathcal{P},\af(\mathcal{D}, \mathbf{t}, \fro) \geq \phi_{\mathrm{min}} \}$ and $\mu$ is Lebesgue measure. 
\end{restatable}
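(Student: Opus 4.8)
The plan is to recognize that, for a finite resource set $\B$, affinity is an affine functional of the distribution $\D$, and then to invoke a Markov-type inequality against the uniform (normalized Lebesgue) measure on the simplex $\mathcal{P}$. Concretely, writing the expectation in the affinity definition as a finite sum and using $\sum_{l \in \B}\D(l)=1$ gives
\[
\af(\D, \mathbf{t}, \fro) = \sum_{l \in \B} \D(l)\,\g(t, \frol) - \g(t, \fro),
\]
which is linear in $\D$ up to the additive constant $-\g(t,\fro)$. The one wrinkle is that affinity can be negative, so Markov's inequality does not apply to it directly. I would therefore shift to the nonnegative quantity $g(\D) := \af(\D,\mathbf{t},\fro) + \g(t,\fro) = \sum_{l\in\B}\D(l)\,\g(t,\frol)$, which is an expectation of success probabilities and hence lies in $[0,1]$, and rewrite the event $\{\af(\D,\mathbf{t},\fro)\ge\phi_{\mathrm{min}}\}$ as $\{g(\D)\ge \phi_{\mathrm{min}}+\g(t,\fro)\}$.

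Step by step: (1) express $\af$ and $g$ as above; (2) compute the average of $g$ over $\mathcal{P}$, using that $g$ is linear and that the centroid of the probability simplex is exactly the uniform distribution $\mathcal{U}[\B]$, so that $\frac{1}{\mu(\mathcal{P})}\int_{\mathcal{P}} g(\D)\,\dif\mu(\D) = g(\mathcal{U}[\B]) = \g(t,\fro) + \af(\mathcal{U}[\B],\mathbf{t},\fro)$; (3) apply Markov's inequality to the nonnegative $g$ under the probability measure $\nu = \mu/\mu(\mathcal{P})$, giving $\nu(\{g(\D)\ge a\}) \le g(\mathcal{U}[\B])/a$ for any $a>0$; (4) set $a = \phi_{\mathrm{min}} + \g(t,\fro)$ and identify $\nu(\mathcal{G}_{\mathbf{t},\phi_{\mathrm{min}}})$ with the left-hand ratio $\mu(\mathcal{G}_{\mathbf{t},\phi_{\mathrm{min}}})/\mu(\mathcal{P})$; (5) since $\g(t,\fro)\ge 0$ and the numerator $g(\mathcal{U}[\B])\ge 0$, relax the denominator from $\phi_{\mathrm{min}}+\g(t,\fro)$ down to $\phi_{\mathrm{min}}$ to recover the stated bound.

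The main obstacle is step (2): justifying that integrating the affine functional over $\mathcal{P}$ and normalizing returns its value at the uniform distribution. This rests on two facts, namely the linearity of $g$ in $\D$ (so the integral commutes with the functional) and the identification of the barycenter of the standard $(|\B|-1)$-dimensional simplex with $\mathcal{U}[\B]$, together with the observation that $\mathcal{P}$ must carry the $(|\B|-1)$-dimensional Lebesgue measure for the ratio to be well defined. Everything else is bookkeeping: verifying $g \in [0,1]$ so that Markov applies, and checking that the final denominator relaxation only weakens the inequality, which is valid precisely because both $\g(t,\fro)$ and the numerator $g(\mathcal{U}[\B])$ are nonnegative. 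This mirrors the simplex-integration argument underlying the bias ``famine'' results, now carried out for affinity.
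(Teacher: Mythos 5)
Your proposal is correct and follows essentially the same route as the paper: shift the event by $\g(t,\fro)$ so the relevant quantity $\E_{\D}[\mathbf{t}^{\top}\mathbf{P}_{\g,\froL}]$ is nonnegative, apply Markov's inequality under the normalized Lebesgue measure on the simplex, identify the resulting mean with $\g(t,\fro)+\af(\mathcal{U}[\B],\mathbf{t},\fro)$, and relax the denominator from $\phi_{\mathrm{min}}+\g(t,\fro)$ to $\phi_{\mathrm{min}}$. Your barycenter-of-the-simplex argument in step (2) is exactly the content of the paper's Equivalence of Affinity lemma (Lemma~\ref{lem:equiaf}), which the paper proves via linearity and the fact that $\E_{\unif}[\D(l)]=1/|\B|$.
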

We find that highly favorable distributions are quite rare for problems that are difficult without transfer learning, unless we restrict ourselves to distributions over sets of highly favorable learning resources. (Clearly, finding a favorable distribution over a set of good options is not a difficult problem.) Additionally, note that we have recovered the same bound as in Theorem~\ref{thm:fofli}.

\begin{restatable}[Success Difference from Distribution Divergence]{theorem}{sa}
Given the performance of a search algorithm on the recipient problem in the transfer learning case, $\g_{TL}$, and without the learning resource, $\g_{NoTL}$, we can upperbound the absolute difference as
\begin{align*}
    |\g_{TL} - \g_{NoTL}| &\leq |T|\sqrt{\frac{1}{2}D_{KL}(\mathbf{P}_{TL}||\mathbf{P}_{NoTL})}.
\end{align*} 
\end{restatable}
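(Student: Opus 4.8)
The plan is to reduce the claim to a bound on the distance between two probability distributions over $\Omega$ and then invoke Pinsker's inequality. First I would unpack both performance quantities using the defining property of a decomposable probability-of-success metric, $\phi(t,f) = \mathbf{t}^{\top}\mathbf{P}_{\phi,f}$. Writing $\mathbf{P}_{TL} := \mathbf{P}_{\phi,\froL}$ for the (target-independent) success distribution induced when the transferred resource is present and $\mathbf{P}_{NoTL} := \mathbf{P}_{\phi,\fro}$ for the distribution induced without it, this gives $\phi_{TL} = \mathbf{t}^{\top}\mathbf{P}_{TL}$ and $\phi_{NoTL} = \mathbf{t}^{\top}\mathbf{P}_{NoTL}$. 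Linearity of the inner product then yields $\phi_{TL} - \phi_{NoTL} = \mathbf{t}^{\top}(\mathbf{P}_{TL} - \mathbf{P}_{NoTL})$, so the whole problem becomes controlling how a $k$-hot vector pairs against the difference of two distributions.

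Next I would exploit that $\mathbf{t}$ is a $0/1$ vector whose support is exactly the target set $T$, so that $\mathbf{t}^{\top}(\mathbf{P}_{TL} - \mathbf{P}_{NoTL}) = \sum_{i \in T}\bigl(P_{TL}(i) - P_{NoTL}(i)\bigr)$. Applying the triangle inequality bounds the absolute difference by $\sum_{i \in T}\lvert P_{TL}(i) - P_{NoTL}(i)\rvert$. Each summand is the discrepancy of the two distributions evaluated on the singleton event $\{i\}$, and is therefore at most the total variation distance $\delta(\mathbf{P}_{TL}, \mathbf{P}_{NoTL}) = \sup_{A \subseteq \Omega}\lvert P_{TL}(A) - P_{NoTL}(A)\rvert$. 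Since the sum ranges over the $\lvert T\rvert$ elements of the target set, this produces $\lvert \phi_{TL} - \phi_{NoTL}\rvert \leq \lvert T\rvert\,\delta(\mathbf{P}_{TL}, \mathbf{P}_{NoTL})$.

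Finally I would apply Pinsker's inequality in the form $\delta(\mathbf{P}, \mathbf{Q}) \leq \sqrt{\tfrac{1}{2}D_{KL}(\mathbf{P}\,\|\,\mathbf{Q})}$, with $\mathbf{P} = \mathbf{P}_{TL}$ and $\mathbf{Q} = \mathbf{P}_{NoTL}$, and substitute to conclude $\lvert \phi_{TL} - \phi_{NoTL}\rvert \leq \lvert T\rvert\sqrt{\tfrac{1}{2}D_{KL}(\mathbf{P}_{TL}\,\|\,\mathbf{P}_{NoTL})}$, which is exactly the stated bound.

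The step I expect to require the most care is the middle one. The factor of $\lvert T\rvert$ arises precisely because I bound each target-coordinate gap separately by the total variation distance, rather than bounding the full sum directly by the $\ell_1$ distance (which would absorb the $\lvert T\rvert$ but alter the constant). I would therefore be careful to match the total-variation convention carrying the factor $\tfrac{1}{2}$ to the version of Pinsker's inequality I cite, so that the constant $\tfrac{1}{2}$ appearing under the square root is correct. This bookkeeping, rather than any substantive difficulty, is the only real obstacle; the remainder is an application of standard information-theoretic inequalities to the decomposable-metric decomposition.
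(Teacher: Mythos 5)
Your proposal is correct and follows essentially the same route as the paper: expand both quantities via the decomposable-metric identity $\phi = \mathbf{t}^{\top}\mathbf{P}_{\phi,f}$, bound the resulting sum over $T$ by $|T|$ times the worst-case pointwise (equivalently, total-variation) discrepancy, and finish with Pinsker's inequality. The only cosmetic difference is that you make the intermediate total-variation step explicit, whereas the paper passes directly from $\sup_{\omega \in T}|\mathbf{P}_{TL}(\omega)-\mathbf{P}_{NoTL}(\omega)|$ to the KL bound.
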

This result shows that unless using the learning resource significantly changes the resulting distribution over the search space, the change in performance from transfer learning will be minimal.

\section{Examples and Applications}

\subsection{Examples}

We can use examples to evaluate our theoretical results. To demonstrate how Theorem \ref{TLuD} can apply to an actual case of machine learning, we can construct a pair of machine learning problems in such a way that we can properly quantify each of the terms in the inequality, allowing us to show how the probability of successful search is directly affected by the transfer of knowledge from the source problem.

Let $\mathrm{\Omega}$ be a $16 \times 16$ grid and $|T| = k = 1$. In this case, we know that the target set is a single cell in the grid, so choosing a target set is equivalent to choosing a cell in the grid. Let the distribution on target sets $P_T$ be uniformly random across the grid. For simplicity, we will assume that there is no information about the target set in the information resource, and that any information will have to come via transfer from the source problem. Thus, $I(F_R; T_R) = 0$. 

First, suppose that we provide no information through transfer,  meaning that a learning algorithm can do no better than randomly guessing. The probability of successful search will be $1/256$. We can calculate the bound from our theorem using the known quantities:
\begin{itemize}
\item $I(F_S; T_R) = 0$;
\item $I(F_R; T_R) = 0$;
\item $H(T) = 8$ (because it takes 4 bits to specify a row and 4 bits to specify a column) 
\item $D(P_{T_R} \| \mathcal{U}_{T_R}) = \log_2 \binom{256}{1} - H(T) = 8 - 8 = 0$;
\item $I_{\mathrm{\Omega}} = -\log 1/256 = 8$;
\end{itemize}
Thus, we upper bound the probability of successful search at $1/8$. 

Now, suppose that we had an algorithm which had been trained to learn which half, the top or bottom, our target set was in. This is a relatively easier task, and would be ideal for transfer learning. Under these circumstances, the actual probability of successful search doubles to $1/128$. We can examine the effect that this transfer of knowledge has on our probability of success.
\begin{itemize}
    \item $I(F_S; T_R) = H(T_R) - H(T_R | F_S) = 8 - 7 = 1$;
    \item $I(F_R; T_R) = 0$;
    \item $H(T) = 8$;
    \item $D(P_{T_R} \| \mathcal{U}_{T_R}) = 0$;
    \item $I_{\mathrm{\Omega}} = 8$;
\end{itemize}
The only change is in the mutual information between the the recipient target set and the source information resource, which was able to perfectly identify which half the target set was in. This brings the probability of successful search to $1/4$, exactly twice as high as without transfer learning. 

This result is encouraging, because it demonstrates that the upper bound for transfer learning under dependence is able to reflect changes in the use of transfer learning and their effects. The upper bound being twice as high when the probability of success is doubled is good. However, the bound is very loose. In both cases, the bound is 32 times as large as the actual probability of success. Tightening the bound may be possible; however, as seen in this example, the bound we have can already serve a practical purpose.

\subsection{Transferability Heuristic}

Our theoretical results suggest that we cannot expect transfer learning to be successful without careful selection of transferred information. Thus, it is imperative to identify instances in which transferred resources will raise the probability of success. In this section, we explore a simple heuristic indicating conditions in which transfer learning may be successful, motivated by our theorems. Theorem~\ref{TLuD} shows that source information resources with strong dependence on the recipient target can raise the upper bound on performance. Thus, given a source problem and a recipient problem, our heuristic uses the success of an algorithm on the recipient problem after training solely on the source problem and \textbf{not} the recipient problem as a way of assessing potential for successful transfer. Using a classification task, we test whether this heuristic reliably identifies cases where transfer learning works well.

We focused on two similar image classification problems, classifying tigers versus wolves\footnote{\url{http://image-net.org/challenges/LSVRC/2014/browse-synsets}} (TvW) and classifying cats versus dogs\footnote{\url{https://www.kaggle.com/c/dogs-vs-cats-redux-kernels-edition/data}} (CvD). Due to the parallels in these two problem, we expect that a model trained for one task will be able to help us with the other. In our experiment, we used a generic deep convolutional neural network image classification model (VGG16 \cite{simonyan2014very}, using Keras\footnote{\url{https://keras.io/applications/#vgg16}}) to evaluate the aforementioned heuristic to see whether it correlates with any benefit in transfer learning. The table below contains our results: 

\begin{center}
     \begin{tabular}{|c|P{1.5cm}|P{2.5cm}|P{1.5cm}|P{2cm}|P{3cm}|}\hline
    Run & Source Problem & Source Testing \; Accuracy   & Recipient Problem & Additional Training & Recipient Testing \; Accuracy \\
    \hline 
     1 &  CvD  &  84.8\% &  TvW & N &  74.24\%\\
     2 &  CvD  &  84.8\% &  TvW & Y &  95.35\%\\
     3 &  TvW  &  92.16\% &  CvD &  N & 48.36\%\\
     4 &  TvW  &  92.16\% &  CvD &  Y &  82.44\%\\
     \hline 
\end{tabular}
\end{center}

The {\tt Source Problem} column denotes the problem we are transferring from, and the {\tt Recipient Problem} column denotes the problem we are transferring to. The {\tt Source Testing Accuracy} column contains the image classification model's testing accuracy on the source problem after training on its dataset, using a disjoint test dataset. The {\tt Additional Training} column indicates whether we did any additional training before testing the model's accuracy on the recipient problem's dataset --- {\tt N} indicates no training, which means that the following entry in the second {\tt Recipient Testing Accuracy} column contains the results of the heuristic, while {\tt Y} indicates an additional training phase, which means that the following entry in the {\tt Recipient Testing Accuracy} column contains the experimental performance of transfer learning. In each run we start by training our model on the source problem.

Consider Runs 1 and 2. Run 1 is the heuristic run for the CvD $\rightarrow$ TvW transfer learning problem. When we apply the trained CvD model to the TvW problem without retraining, we get a testing accuracy of 74.24\%. This result is promising, as it's significantly above a random fair coin flip, indicating that our CvD model has learned something about the difference between cats and dogs that can be weakly generalized to other images of feline and canine animals. Looking at Run 2, we see that taking our model and training additionally on the TvW dataset yields a transfer learning testing accuracy of 95.35\%, which is higher than the testing accuracy when we train our model solely on TvW (92.16\%). This is an example where transfer learning improves our model's success, suggesting that the pre-training step is helping our algorithm generalize.

When we look at Runs 3 and 4, we see the other side of the picture. The heuristic for the TvW $\rightarrow$ CvD transfer learning problem in Run 3 is a miserable 48.36\%, which is roughly how well we would do randomly flipping a fair coin. It's important to note that this heuristic is not symmetric, which is to be expected --- for example, if the TvW model is learning based on the background of the images and not the animals themselves, we would expect a poor application to the CvD problem regardless of how well the CvD model can apply to the TvD problem. Looking at Run 4, the transfer learning testing accuracy is 82.44\%, which is below the testing accuracy when we train solely on the CvD dataset (84.8\%). This offers some preliminary support for our heuristic --- when the success of the heuristic is closer to random, it may be the case that pre-training not only fails to benefit the algorithm, but can even hurt performance.

Let us consider what insights we can gain from the above results regarding our heuristic. A high value means that the algorithm trained on the source problem is able to perform well on the recipient problem, which indicates that the algorithm is able to identify and discriminate between salient features of the recipient problem. Thus, when we transfer what it learns (e.g., the model weights), we expect to see a boost in performance. Conversely, a low value (around 50\%, since any much lower would allow us to simply flip the labels to obtain a good classifier) indicates that the algorithm is unable to learn features useful for the recipient problem, so we would expect transfer to be unsuccessful. It's important to note that this heuristic is heavily algorithm independent, which is not the case for our theoretical results --- problems with a large degree of latent similarity can receive poor values by our heuristic if the algorithm struggles to learn the underlying features of the problem.

These results offer preliminary support for the suggested heuristic, which was proposed to identify information resources that would be suitable for transfer learning. More research is needed to explore how well it works in practice on a wide variety of problems, which we leave for future work.

\section{Conclusion}

Transfer learning is a type of machine learning that involves a source and recipient problem, where information learned by solving the source problem is used to benefit the process of solving the recipient problem. A popular and potentially lucrative avenue of application is in transferring knowledge from data-rich problems to more niche, difficult problems that suffer from a lack of clean and dependable data. To analyze the bounds of transfer learning, applicable to a large diversity of source/recipient problem pairs, we cast transfer learning into the algorithmic search framework, and define affinity as the degree to which learned information is predisposed towards the recipient problem's target. In our work, we characterize various properties of affinity, show why affinity is essential for the success of transfer learning, and prove results connecting the probability of success of transfer learning to elements of the search framework.

Additionally, we introduce a heuristic to evaluate the likelihood of success of transfer, namely, the success of the source algorithm applied directly to the recipient problem without additional training. Our results show that the heuristic holds promise as a way of identifying potentially transferable information resources, and offers additional interpretability regarding the similarity between the source and recipient problems.

Much work remains to be done to develop theory for transfer learning. Through the results presented here, we learn that there are limits to when transfer learning can be successful, and gain some insight into what powers successful transfer between problems.

\bibliographystyle{plain}
\bibliography{references}

\pagebreak
\section*{Appendix: Proofs}

\consofaffin*

\begin{proof}
        Note that $\sum_{\mathbf{t}}\mathbf{t}$ is the sum of all target vectors definable on $\mathrm{\Omega}$, which themselves correspond to the nonempty subsets of $\mathrm{\Omega}$. Thus, the sum equals a constant vector, $c \cdot \bm{1} = [c, c, \ldots, c]^{\top}$ where $c = 2^{|\mathrm{\Omega}|-1}$.
        
        By the definition of affinity and the linearity of expectation, we have
        \begin{align*}
            \sum_{\mathbf{t}} \af(\D, \mathbf{t}, \fro) 
            &= \sum_{\mathbf{t}}[ \E_\D[\mathbf{t}^{\top}\mathbf{P}_{\phi, \froL}]- \mathbf{t}^{\top}\mathbf{P}_{\phi, \fro}] \\
            &= \sum_{\mathbf{t}} \E_\D[\mathbf{t}^{\top}\mathbf{P}_{\phi, \froL}]- \sum_{\mathbf{t}}\mathbf{t}^{\top}\mathbf{P}_{\phi, \fro} \\
            &=  \left(\sum_{\mathbf{t}}\mathbf{t}^{\top}\right)\E_\D[\mathbf{P}_{\phi, \froL}]- \left(\sum_{\mathbf{t}}\mathbf{t}^{\top}\right)\mathbf{P}_{\phi, \fro} \\
            &=  (c \cdot \bm{1}^{\top})\E_\D[\mathbf{P}_{\phi, \froL}]- (c \cdot \bm{1}^{\top})\mathbf{P}_{\phi, \fro} \\
            &=  c \cdot (\bm{1}^{\top}\E_\D[\mathbf{P}_{\phi, \froL}])- c \cdot (\bm{1}^{\top}\mathbf{P}_{\phi, \fro}) \\
            &= c - c = 0
        \end{align*}
        where the third equality follows from the fact that neither $\E_\D[\mathbf{P}_{\phi, \froL}]$ nor $\mathbf{P}_{\phi, \fro}$ is a function of $\mathbf{t}$, allowing both to be pulled out of their sums, and the  penultimate equality follows from the linearity of expectation and the fact that $\bm{1}^{\top}\mathbf{P} = 1$ for any probability mass vector $\mathbf{P}$.
\end{proof}

\begin{restatable}{lemma}{mutualInfoLem}\label{lem:MI}
If $I( \FRoL ; T_R) \leq I( \FRo , L; T_R)$  then
\[
I(\FRoL; T_R) \leq I(F_S; T_R) + I(\FRo; T_R).
\]
\end{restatable}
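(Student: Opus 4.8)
The plan is to invoke the hypothesis to pass from $I(\FRoL; T_R)$ to $I(\FRo, L; T_R)$, then decompose the latter with the chain rule for mutual information and bound the piece involving the learned resource $L$ by $I(F_S; T_R)$. First I would write
\[
I(\FRo, L; T_R) = I(\FRo; T_R) + I(L; T_R \mid \FRo),
\]
so that the first summand is already the $I(\FRo; T_R)$ term in the target bound. It then suffices to show $I(L; T_R \mid \FRo) \le I(F_S; T_R)$.

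The key is to read two conditional independences off the dependence graph of Figure~\ref{DSTL}. Because the transferred string $L$ is produced from the source resource, $F_S$ is the only parent of $L$, and $\FRoL$ is a collider on every path joining $L$ (or $F_S$) to $\FRo$ that does not pass through $T_R$. Since we never condition on $\FRoL$, this gives (i) $L \perp T_R \mid (F_S, \FRo)$ and (ii) $F_S \perp \FRo \mid T_R$. Using (i), I would expand $I(F_S, L; T_R \mid \FRo)$ two ways: as $I(F_S; T_R \mid \FRo) + I(L; T_R \mid F_S, \FRo) = I(F_S; T_R \mid \FRo)$ by (i), and as $I(L; T_R \mid \FRo) + I(F_S; T_R \mid L, \FRo) \ge I(L; T_R \mid \FRo)$ by nonnegativity; hence $I(L; T_R \mid \FRo) \le I(F_S; T_R \mid \FRo)$. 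Using (ii), I would expand $I(F_S; T_R, \FRo)$ two ways: as $I(F_S; T_R) + I(F_S; \FRo \mid T_R) = I(F_S; T_R)$ by (ii), and as $I(F_S; \FRo) + I(F_S; T_R \mid \FRo) \ge I(F_S; T_R \mid \FRo)$; hence $I(F_S; T_R \mid \FRo) \le I(F_S; T_R)$. Chaining the two bounds gives $I(L; T_R \mid \FRo) \le I(F_S; T_R)$, which combined with the chain-rule decomposition and the hypothesis yields the claim.

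The main obstacle I anticipate is step (ii). Conditioning on $\FRo$ can in general \emph{increase} the mutual information $I(F_S; T_R \mid \FRo)$ above $I(F_S; T_R)$, so the inequality is not automatic; it relies entirely on the independence $F_S \perp \FRo \mid T_R$, which holds only because $T_R$ is the common cause of $\FRo$ and (through $F_S$) of $L$, and because the alternative route between $F_S$ and $\FRo$ runs through the unconditioned collider $\FRoL$. Verifying these d-separation claims carefully---in particular that leaving $\FRoL$ unconditioned blocks rather than opens the collider path---is the delicate part of the argument; the remaining manipulations are routine applications of the chain rule and nonnegativity of mutual information.
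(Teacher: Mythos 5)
Your proof is correct, and the d-separation claims you flag as delicate do hold in the graph of Figure~\ref{DSTL}: every path from $L$ (or from $F_S$) to $\FRo$ that avoids $T_R$ must pass through the unconditioned collider $\FRoL$, so both $L \perp T_R \mid (F_S, \FRo)$ and $F_S \perp \FRo \mid T_R$ are valid. Your opening move---invoking the hypothesis and splitting $I(\FRo, L; T_R) = I(\FRo; T_R) + I(L; T_R \mid \FRo)$---is exactly the paper's. Where you diverge is in how the residual term $I(L; T_R \mid \FRo)$ gets bounded by $I(F_S; T_R)$. The paper expands it into entropies, uses the single conditional independence $H(L \mid \FRo, T_R) = H(L \mid T_R)$ (i.e.\ $L \perp \FRo \mid T_R$), drops the conditioning via $H(L \mid \FRo) \le H(L)$ to arrive at $I(L; T_R)$, and finishes with the off-the-shelf Data Processing Inequality on the chain $T_R \to F_S \to L$. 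You instead stay entirely at the level of mutual information, establishing $I(L; T_R \mid \FRo) \le I(F_S; T_R \mid \FRo) \le I(F_S; T_R)$ by two double applications of the chain rule combined with your independences (i) and (ii); the first of these two steps is in effect a conditional form of the data processing inequality derived from scratch. Both arguments lean on the same graphical structure and reach the same bound; the paper's is slightly shorter because ``conditioning reduces entropy'' and the standard DPI absorb some of the work, while yours is more explicit about exactly which two d-separation statements the inequality depends on---in particular your observation that $I(F_S; T_R \mid \FRo) \le I(F_S; T_R)$ is \emph{not} automatic and genuinely requires $F_S \perp \FRo \mid T_R$ is a point the paper's route never has to confront.
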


\begin{proof}
\begin{align*}
 I(\FRoL; T_R) &\leq I(\FRo, L; T_R) \\
&= I(L; T_R \mid \FRo) + I(\FRo; T_R) \\
&= H(L \mid \FRo) - H(L \mid \FRo, T_R) + I(\FRo; T_R)  \\
&= H(L \mid \FRo) - H(L \mid T_R) + I(\FRo; T_R)  \\
&\leq H(L) - H(L \mid T_R) + I(\FRo;T_R)\\
&= I(L;T_R) + I(\FRo;T_R) \\
&\leq I(F_S; T_R) + I(\FRo; T_R)
\end{align*} 
where the first equality follows from application of the chain rule for mutual information, the second and fourth equalities follow from the definition of mutual information, the third equality follows from the conditional independence assumption, and the final inequality follows by application of the Data Processing Inequality~\cite{cover2012elements}.
\end{proof}

\tld*
\begin{proof}
         By d-separation of the graphical model structure in Figure~\ref{DSTL} and the Data Processing Inequality~\cite{cover2012elements}, we have that $I(\FRoL; T_R) \leq I(\FRo, L; T_R)$.
         Applying the result from Lemma~\ref{lem:MI} to the Learning Under Dependence theorem \cite{sam2020decomposable}, we obtain
         \begin{align*}
              \phi_{TL} &\leq \frac{I(F_{R+L}; T_R) + D(P_{T_R} \| \mathcal{U}_{T_R}) + 1}{I_{\mathrm{\Omega}}}\\
              &\leq \frac{I(F_S; T_R) + I(F_R; T_R) + D(P_{T_R} \| \mathcal{U}_{T_R}) + 1}{I_{\mathrm{\Omega}}}.
         \end{align*}
 \end{proof}
 
 \fofli*
 \begin{proof}
We seek to bound the proportion of successful search problems for which $\g(t, f) \geq \phi_{\mathrm{min}}$ for any threshold $\phi_{\mathrm{min}} \in (0, 1]$. \ Then, 
        \begin{align*}
            \frac{|\mathcal{B}_{q_{\mathrm{min}}}|}{|\mathcal{B}|} &= \frac{1}{ |\mathcal{B}|} \sum_{l \in \mathcal{B}} \mathds{1}_{\g(t,\frol) \geq \phi_{\mathrm{min}}}\\
                                               &=  \mathbb{E}_{\mathcal{U}[\mathcal{B}]}[\mathds{1}_{\g(t,\froL) \geq \phi_{\mathrm{min}}}] \\
                                               &= \Pr(\g(t, \froL) \geq \phi_{\mathrm{min}})\\
                                            &=\Pr(\mathbf{t}^{\top} \mathbf{P}_{\g, \froL} \geq \phi_\mathrm{min})
        \end{align*}
        where the final equality follows from the definition of \dms.
        
        Note that all of the randomness in $\froL$ comes from the learned information, $L$, and not the fixed \r{} information resource $R_o$. Applying Markov's Inequality and the definition of $\af(\mathcal{D}_L,\A, \mathbf{t})$, we obtain
        \begin{align*}
            \frac{|\mathcal{B}_{\phi_{\mathrm{min}}}|}{|\mathcal{B}|} &\leq \frac{\mathbb{E}_{\mathcal{U}[\mathcal{B}]} [\mathbf{t}^{\top} \mathbf{P}_{\g,  \froL}]}{\phi_{\mathrm{min}}} \\
                                               &= \frac{\g(t, \fro) + \af(\mathcal{U}[\B],\mathbf{t}, \fro)}{\phi_{\mathrm{min}}}.
        \end{align*}
    \end{proof}
    
\futility*
\begin{proof}
        Let $\mathcal{L}$ be the space of possible learning resources. Then,
        \begin{align*}
            \Pr(\omega \in t; \mathcal{A}_L) 
                &= \int_\mathcal{L} \Pr(\omega \in t, l; \mathcal{A}) \dif l\\
                &= \int_\mathcal{L} \Pr(\omega \in t \mid l; \mathcal{A})\Pr(l) \dif l.
        \end{align*}
        Since we are considering the general $\phi$ probability of success for algorithm $\mathcal{A}$ on $t$ using learning resource $l$, but with a fixed \r{} information resource $\fro$, we have
        \[
          \Pr(\omega \in t \mid l; \mathcal{A}) = P_{\g, \fro}(\omega \in t \mid l) = P_{\g, \frol}(\omega \in t).
        \]
        Also note that $\Pr(l) = \mathcal{D}_L(l)$ because our information resources are drawn from the distribution $\mathcal{D}_L$. Making these substitutions, we obtain
        \begin{align*}
            \Pr(\omega \in t; \mathcal{A}_L) 
                &= \int_\mathcal{L} P_{\g, \frol}(\omega \in t)\mathcal{D}_L(l) \dif l\\
                &= \mathbb{E}_{\mathcal{D}_L}\left[P_{\g, \froL}(\omega \in t)\right]\\
                &= \mathbb{E}_{\mathcal{D}_L}\left[\mathbf{t}^{\top}\textbf{P}_{\g,\froL}\right]\\
                &= \af(\mathcal{D}_L, \bm{t}, \fro) + \mathbf{t}^{\top}\textbf{P}_{\g,\fro}\\
                &= \phi(t,\fro).
        \end{align*}
    \end{proof}

\begin{restatable}[Equivalence of Affinity]{lemma}{equivAffin}\label{lem:equiaf}
Given a fixed \r{} problem ($\Omega, t, \fro$), where $t$ has corresponding target function $\mathbf{t}$, a finite set of learning resources $\B$, and a set $\mathcal{P} = \{\mathcal{D}\mid \mathcal{D}\in \mathbb{R}^{|\B|}, \sum_{l \in \B} \mathcal{D}(l) = 1\}$ of all discrete $|\B|$-dimensional simplex vectors,
\begin{align*}
    \E_{\unif}[\af(\mathcal{D},\mathbf{t},\fro)] = \af(\mathcal{U}[\B],\mathbf{t},\fro)
\end{align*}
where $\mathcal{D} \sim \unif$.
\end{restatable}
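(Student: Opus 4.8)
The plan is to expand the left-hand side using the definition of affinity, use linearity of expectation to pull the outer expectation over $\mathcal{D} \sim \unif$ inside the finite sum over $\B$, and then evaluate the resulting coordinate-wise expectations by symmetry. First I would write, for a discrete distribution $\mathcal{D}$ supported on $\B$, the affinity as $\af(\mathcal{D}, \mathbf{t}, \fro) = \sum_{l \in \B} \mathcal{D}(l)\, \mathbf{t}^{\top}\mathbf{P}_{\g, \frol} - \mathbf{t}^{\top}\mathbf{P}_{\g, \fro}$, since under a discrete $\mathcal{D}$ the expectation $\E_{\mathcal{D}}[\mathbf{t}^{\top}\mathbf{P}_{\g, \froL}]$ is exactly the weighted sum $\sum_{l \in \B} \mathcal{D}(l)\, \mathbf{t}^{\top}\mathbf{P}_{\g, \frol}$.

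Taking the expectation over $\mathcal{D} \sim \unif$ and applying linearity (the sum over $\B$ is finite and the term $\mathbf{t}^{\top}\mathbf{P}_{\g, \fro}$ is constant in $\mathcal{D}$), I obtain $\E_{\unif}[\af(\mathcal{D}, \mathbf{t}, \fro)] = \sum_{l \in \B} \E_{\unif}[\mathcal{D}(l)]\, \mathbf{t}^{\top}\mathbf{P}_{\g, \frol} - \mathbf{t}^{\top}\mathbf{P}_{\g, \fro}$. The quantity that must be evaluated is therefore $\E_{\unif}[\mathcal{D}(l)]$, the expected value of a single simplex coordinate under the uniform distribution on $\mathcal{P}$.

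The main step — and essentially the only nontrivial one — is to argue that $\E_{\unif}[\mathcal{D}(l)] = 1/|\B|$ for every $l \in \B$. I expect the cleanest justification to be a symmetry (exchangeability) argument: the uniform distribution over the simplex $\mathcal{P}$ is invariant under any permutation of the $|\B|$ coordinates, so the marginal expectations $\E_{\unif}[\mathcal{D}(l)]$ are all equal; since they must sum to $\E_{\unif}[\sum_{l \in \B} \mathcal{D}(l)] = \E_{\unif}[1] = 1$, each equals $1/|\B|$. An alternative would be to cite the mean of a symmetric (flat) Dirichlet distribution directly, but the permutation-invariance argument avoids any appeal to an explicit density.

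Substituting $\E_{\unif}[\mathcal{D}(l)] = 1/|\B|$ gives $\E_{\unif}[\af(\mathcal{D}, \mathbf{t}, \fro)] = \frac{1}{|\B|}\sum_{l \in \B} \mathbf{t}^{\top}\mathbf{P}_{\g, \frol} - \mathbf{t}^{\top}\mathbf{P}_{\g, \fro}$. Finally I would recognize the first term as $\E_{\mathcal{U}[\B]}[\mathbf{t}^{\top}\mathbf{P}_{\g, \froL}]$, the expectation under the uniform distribution over the resource set $\B$, so that by the definition of affinity the whole expression collapses to $\af(\mathcal{U}[\B], \mathbf{t}, \fro)$, completing the proof.
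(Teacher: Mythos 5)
Your proof is correct and follows essentially the same route as the paper's: expand the definition of affinity, exchange the finite sum over $\B$ with the expectation over $\mathcal{D} \sim \unif$, and use the permutation symmetry of the uniform measure on the simplex together with $\sum_{l}\mathcal{D}(l)=1$ to conclude $\E_{\unif}[\mathcal{D}(l)] = 1/|\B|$. The only cosmetic difference is that the paper cites an external ``Expectation of Simplex Vectors is Simplex'' result for that coordinate-wise expectation, whereas you justify it directly with the (equivalent) exchangeability argument.
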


\begin{proof}
Let $L \sim \D$. Then, 
\begin{align*}
    & \E_{\unif}[\af(\D,\textbf{t},\fro)]\\
    &\phantom{+++}= \E_{\unif}[\E_{\D}[\mathbf{t}^{\top}\textbf{P}_{\g,\froL}] - \phi(t, \fro)] \\
    &\phantom{+++}= \E_{\unif}\left[\sum_{l \in \B} \D(l)\mathbf{t}^{\top}\textbf{P}_{\g,\frol}\right] - \phi(t, \fro)\\
    & \phantom{+++}= \sum_{l \in \B}\mathbf{t}^{\top}\textbf{P}_{\g,\frol}\E_{\unif}[\D(l)] - \phi(t, \fro)
\end{align*}
The quantity $\E_{\unif}[\D(l)]$ is a uniform expectation on the amount of mass that the random distribution $\D$ places on resource $l$. Since $\mathcal{P}$ contains all possible distributions over $\B$, under uniform expectation the same amount of probability mass gets placed on each information resource. So, $\E_{\unif}[\D(i)] = \E_{\unif}[\D(j)]$ for any $i, j \in \B$. Since the probability mass on any two learning resources is equivalent and the total probability mass must sum to one, by the Expectation of Simplex Vectors is Simplex~\cite{montanez2019fobfl}, we have $\E_{\unif}[\D(f)] = \frac{1}{|\B|}$. Continuing,
\begin{align*}
    \E_{\unif}[\af(\D,\textbf{t},\fro)] &= \frac{1}{|\B|}\sum_{l \in \B} \mathbf{t}^{\top}\textbf{P}_{\g,\froL}  - \phi(t, \fro) \\
    &= \af(\mathcal{U}[\B], \textbf{t}, \fro).
\end{align*}
\end{proof}

\fofad*
\begin{proof}
        Let $\D \sim \unif$. Then,
        \begin{align*}
            \frac{\mu (\mathcal{G}_{\mathbf{t}, \phi_{\mathrm{min}}})}{\mu (\mathcal{P})} &= \Pr(\af(\D, \mathbf{t}, \fro) \geq \phi_{\mathrm{min}}) \\
            &= \Pr[\phi(t,\fro) + \af(\D, \mathbf{t}, \fro) \geq \phi(t,\fro) + \phi_{\mathrm{min}}] \\
            &= \Pr[\E_{\D}[\textbf{t}^{\top}{\bf P}_{\g,\froL}] \geq \phi(t,\fro) + \phi_{\mathrm{min}}].
        \end{align*}
        
        Applying Markov's Inequality and Lemma~\ref{lem:equiaf}, we obtain
        \begin{align*}
            \frac{\mu (\mathcal{G}_{\mathbf{t}, \phi_{\mathrm{min}}})}{\mu (\mathcal{P})} &\leq \frac{\E_{\unif}[\E_{\D}[\textbf{t}^{\top}{\bf P}_{\g,\froL}]]}{\phi(t,\fro) + \phi_{\mathrm{min}}} \\
            &= \frac{\phi(t,\fro) + \E_{\unif}[\af(\mathcal{D}, \mathbf{t}, \fro)]}{\phi(t,\fro) + \phi_{\mathrm{min}}} \\ 
            &= \frac{\phi(t,\fro) + \af(\mathcal{U}[\B], \mathbf{t}, \fro)}{\phi(t,\fro) + \phi_{\mathrm{min}}} \\ 
            &\leq \frac{\phi(t,\fro) + \af(\mathcal{U}[\B], \mathbf{t}, \fro)}{\phi_{\mathrm{min}}}.
        \end{align*}
        
\end{proof}

\sa*

\begin{proof}
        \begin{align*}
            |\g_{TL} - \g_{NoTL}| &= |\textbf{t}^{\top}(\mathbf{P}_{TL}-\mathbf{P}_{NoTL})|\\
            &= |\sum_{\omega}\mathds{1}_{\omega \in T}(\mathbf{P}_{TL}(\omega)-\mathbf{P}_{NoTL}(\omega))|\\
            & \leq |T| \sup_{w \in T} |\mathbf{P}_{TL}(\omega)-\mathbf{P}_{NoTL}(\omega)|\\
            &\leq |T|\sqrt{\frac{1}{2}D_{KL}(\mathbf{P}_{TL}||\mathbf{P}_{NoTL})}
        \end{align*}
        where the first equality follows form the definition of decomposable probability of success metrics and the final inequality follows by application of Pinsker's Inequality.
\end{proof}

\end{document}